\newtheoremstyle{mystyle}
  {}
  {}
  {\itshape}
  {}
  {\bfseries}
  {.}
  { }
  {}
\theoremstyle{mystyle}
\newtheorem{theorem}{Theorem}[section]
\newtheorem{lemma}{Lemma}[section]
\newtheorem{corollary}{Corollary}[lemma]
\newtheorem{definition}{Definition}[section]
\def\BibTeX{{\rm B\kern-.05em{\sc i\kern-.025em b}\kern-.08emT\kern-.1667em\lower.7ex\hbox{E}\kern-.125emX}}
\begin{document}

\copyrightyear{2019}
\acmYear{2019}
\setcopyright{acmcopyright}
\acmConference[KDD '19]{The 25th ACM SIGKDD Conference on Knowledge Discovery and Data Mining}{August 4--8, 2019}{Anchorage, AK, USA}
\acmBooktitle{The 25th ACM SIGKDD Conference on Knowledge Discovery and Data Mining (KDD '19), August 4--8, 2019, Anchorage, AK, USA}
\acmPrice{15.00}
\acmDOI{10.1145/3292500.3330926}
\acmISBN{978-1-4503-6201-6/19/08}

\settopmatter{printacmref=true}
\fancyhead{}

%
\title{Enhancing Domain Word Embedding via Latent Semantic Imputation}

%

\author{Shibo Yao}
\affiliation{%
 \institution{New Jersey Institute of Technology}
 \streetaddress{323 Dr Martin Luther King Jr Blvd}
 \city{Newark}
 \country{United States}}
\email{espoyao@gmail.com}

\author{Dantong Yu} 
\authornote{Corresponding Author.}
\affiliation{%
 \institution{New Jersey Institute of Technology}
 \streetaddress{323 Dr Martin Luther King Jr Blvd}
 \city{Newark}
 \country{United States}}
\email{dtyu@njit.edu}

\author{Keli Xiao} 
\affiliation{%
 \institution{Stony Brook University}
 \streetaddress{346 Harriman Hall }
 \city{Stony Brook}
 \country{United States}}
\email{keli.xiao@stonybrook.edu}

%

%
\begin{abstract}
We present a novel method named Latent Semantic Imputation (LSI) to transfer external knowledge into semantic space for enhancing word embedding.   The method integrates graph theory to extract the latent manifold structure of the entities in the affinity space and leverages non-negative least squares with standard simplex constraints and power iteration method to derive spectral embeddings. It provides an effective and efficient approach to combining entity representations defined in different Euclidean spaces.  Specifically, our approach generates and imputes reliable embedding vectors for low-frequency words in the semantic space and benefits downstream language tasks that depend on word embedding. We conduct comprehensive experiments on a carefully designed classification problem and language modeling and demonstrate the superiority of the enhanced embedding via LSI over several well-known benchmark embeddings. We also confirm the consistency of the results under different parameter settings of our method. 

\end{abstract}

%
%


\begin{CCSXML}
<ccs2012>
<concept>
<concept_id>10010147.10010178.10010179</concept_id>
<concept_desc>Computing methodologies~Natural language processing</concept_desc>
<concept_significance>500</concept_significance>
</concept>
<concept>
<concept_id>10010147.10010257.10010321.10010335</concept_id>
<concept_desc>Computing methodologies~Spectral methods</concept_desc>
<concept_significance>500</concept_significance>
</concept>
<concept>
<concept_id>10010147.10010257.10010258.10010260.10010271</concept_id>
<concept_desc>Computing methodologies~Dimensionality reduction and manifold learning</concept_desc>
<concept_significance>300</concept_significance>
</concept>
<concept>
<concept_id>10003752.10003809.10003635</concept_id>
<concept_desc>Theory of computation~Graph algorithms analysis</concept_desc>
<concept_significance>300</concept_significance>
</concept>
<concept>
<concept_id>10003752.10010061.10010065</concept_id>
<concept_desc>Theory of computation~Random walks and Markov chains</concept_desc>
<concept_significance>300</concept_significance>
</concept>
</ccs2012>
\end{CCSXML}

\ccsdesc[500]{Computing methodologies~Natural language processing}
\ccsdesc[500]{Computing methodologies~Spectral methods}
\ccsdesc[300]{Computing methodologies~Dimensionality reduction and manifold learning}
\ccsdesc[300]{Theory of computation~Graph algorithms analysis}
\ccsdesc[300]{Theory of computation~Random walks and Markov chains}

%
\keywords{representation learning, graph, manifold learning, spectral methods}

%
\maketitle

\section{Introduction}

Embedding words and phrases in continuous semantic spaces has a significant impact on many natural language processing (NLP) tasks.   There are two main types of word embedding approaches aiming at mapping words to real-valued vectors: matrix factorization~\citep{deerwester-indexing-1990,lund_producing_1996,pennington2014glove} and neural networks~\cite{bengio2003neural,mikolov2013distributed}. 
The embedding matrix is a reflection of the relative positions of words and phrases in semantic spaces. 
Embedding vectors  pre-trained on large corpora can serve as the embedding layer in neural network models and may significantly benefit downstream language tasks because reliable external information is transferred from large corpora to those specific tasks. 

Word embedding techniques have achieved remarkable successes in recent years. Nonetheless, there are still critical questions that are not well answered in the literature. 
For example, considering that word embedding usually relies on the distributional hypothesis~\citep{harris1954distributional}, it is difficult to generate reliable word embeddings if the corpus size is small or if indispensable words have relatively low frequencies~\citep{bojanowski2016enriching}.
Such cases can happen in various domain-specific language tasks, \textit{e.g.}, chemistry, biology, and healthcare, where thousands of domain-specific terminologies exist.  
This challenge naturally drives us to find an effective way to leverage available useful information sources to enhance the embedding vectors of words and phrases in domain-specific NLP tasks.  From the perspective of linear algebra,  the problem falls into the scope of combining two matrices, both of which describe entities in their own spaces. 
Theoretically, the combination task can be addressed even if the two matrices do not have identical dimensions; however, the sets of entities they describe must have an intersection to be combined. 

In this paper, we propose a novel method, named \textit{Latent Semantic Imputation}, to combine representation matrices, even those representation matrices defined in different Euclidean spaces. The framework can be further adapted to address the problem of unreliable word embedding and to effectively and efficiently transfer domain affinity knowledge to semantic spaces or even from one semantic space to another. The knowledge transfer becomes the indispensable strategy to fundamental NLP tasks and  ultimately improves their performance.
In summary, the contributions of this paper are as follows:
\begin{itemize}
    \item We formalize the problem of combining entity representations in different Euclidean spaces and propose the method of Latent Semantic Imputation (LSI\footnote{Code available at https://github.com/ShiboYao/LatentSemanticImputation}) to solve the problem. 
    \item To ensure the deterministic convergence, we propose a minimum spanning tree $k$-nearest neighbor graph (MST-$k$-NN) and seek a sparse weight matrix via solving non-negative least squares problems with standard simplex constraints. Such a sparse weight matrix is a representation of the latent manifold structure of the data. 
    \item We design a power iteration method that depicts a random walk on a graph to derive the unknown word embedding vectors while preserving the existing embeddings.  
    \item We demonstrate how LSI can be further adapted for enhancing word embedding and will benefit downstream language tasks through experiments. 
\end{itemize}

\section{Related Work}
A typical matrix factorization approach for word embedding is based on word co-occurrence matrices. Such a matrix implicitly defines a graph in which the vertices represent words and the matrix itself encodes the affinity.
By taking a set of significant eigenvectors of this matrix, we attain low-dimensional word embedding, with a couple of variations~\citep{bullinaria2007extracting,turney2010frequency,deerwester-indexing-1990,hashimoto2016word,levy2014neural,lund_producing_1996}. 
The neural network approach for word embedding dates back to the neural probabilistic language model~\citep{bengio2003neural}, where the embeddings are precisely the weight matrix of the first hidden layer and generated during the language model training process.  The common practice nowadays is to put the embeddings pre-trained from a large corpus back into the embedding layer of the neural network and facilitate the training of language tasks. 
These two approaches appear to be isolated from each other; nevertheless, they have surprisingly fundamental connections. Levy and Goldberg illustrated that skip-gram with negative sampling (SGNS)~\citep{mikolov2013distributed} implicitly factorizes the word-context matrix constructed from the point-wise mutual information~\citep{levy2014neural}.  
\citet{li2015word} argued that SGNS is an explicit factorization of the word co-occurrence matrices.  From the perspective of vector space models of semantics~\citep{turney2010frequency}, the log co-occurrence of words is related to their semantic similarity, and the word embedding method is a metric recovery process in semantic spaces with manifold learning~\citep{hashimoto2016word}. 

Our study is closely related to spectral embedding methods that try to calculate the most significant eigenvectors (or linear combinations of the significant eigenvectors) of the adjacency matrix.
The idea of spectral embedding methods dates back to principal component analysis~\citep{diamantaras1996principal} that is widely used for linear dimensionality reduction. With the Laplacian matrix, spectral embedding methods capture the nonlinearity and expand its application cases into clustering~\citep{von2007tutorial,lin2010power,huang2014diverse} and nonlinear dimensionality reduction. 

The construction of the weight matrix (or affinity matrix) is essential in spectral embedding.  The weight matrices are usually sparse due to the assumption that data points in Euclidean spaces only ``interact'' with their nearest neighbors and reveal the Euclidean geometrical properties locally~\citep{saul2003think}. 
In other words, the high-dimensional data usually follow a lower-dimensional manifold distribution~\citep{roweis2000nonlinear}. 
Manifold learning has a wide range of applications in representation learning and data visualization~\citep{belkin2003laplacian, maaten2008visualizing}.
Manifold learning has an intrinsic relationship with graph theory due to the common practice of using graphs to describe the manifold structure embedded in the original high-dimensional space~\citep{roweis2000nonlinear,LiuHC10}. 

Manifold learning models often produce closed-form solutions. However, closed-form solutions usually involve the inverse or eigendecomposition of a large matrix, both of which are computationally expensive. Instead, we leverage the power iteration method to obtain solutions efficiently in the case where the primary concern is the time complexity.  This work is also inspired by transductive learning and label propagation~\citep{zhu2002learning,bengio200611}.  The major difference is that label propagation tries to propagate labels in the label space, while our work seeks to learn representations in the semantic space. For the power iteration to converge deterministically, that is, where the final result does not depend on the initialization of the unknown embeddings, we have to rigorously formulate the processes of constructing the graph and solving its weight matrix.

\section{Problem Definition and Proposed Approach Overview}
\begin{figure}[b]
  \centering
  \includegraphics[width=.9\linewidth]{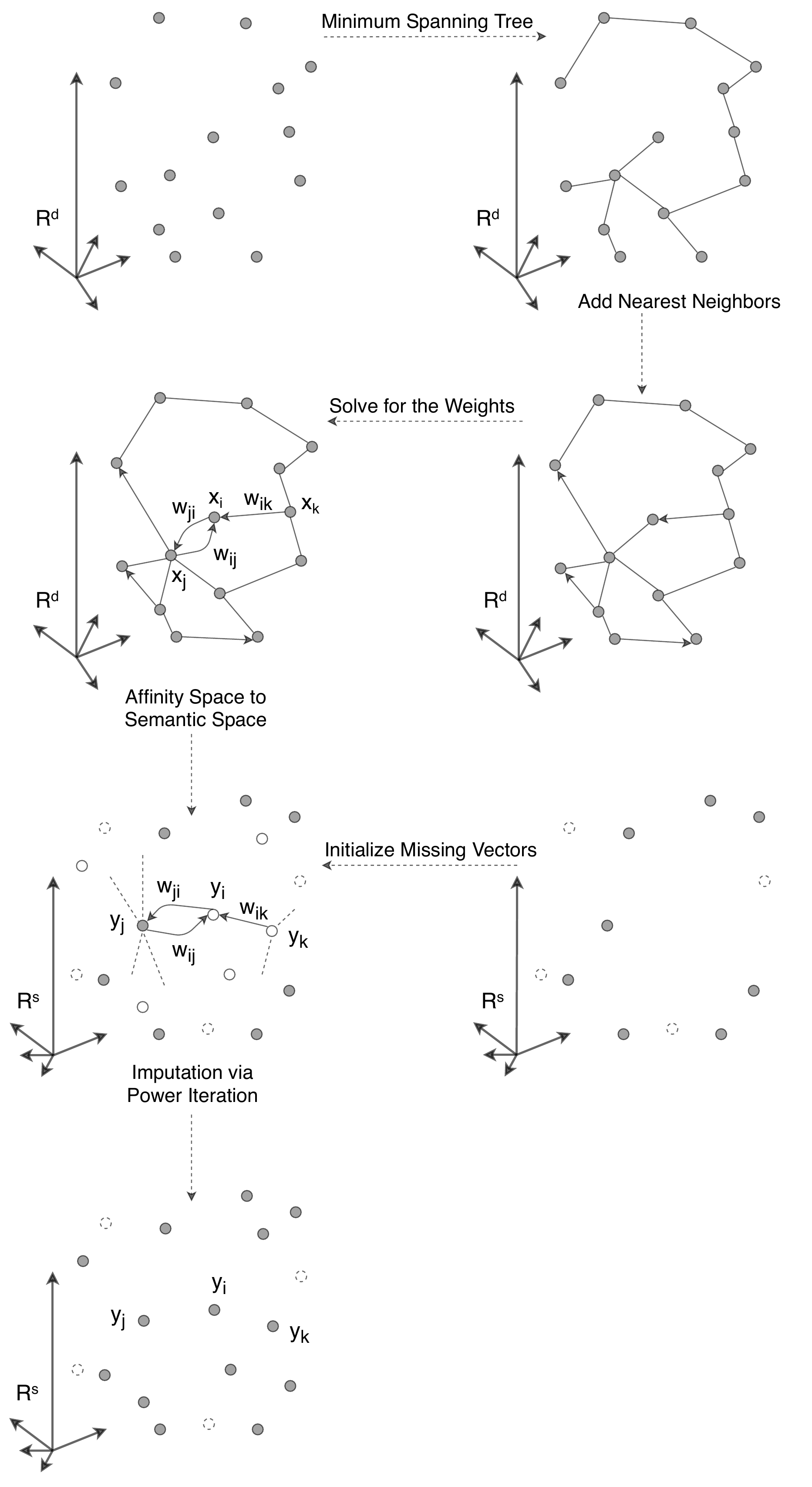}
  \caption{Latent Semantic Imputation. $\mathbb{R}^d$ is the affinity space and $\mathbb{R}^s$ is the semantic space. The grey solid circles in $\mathbb{R}^d$ are the overlapping entities in the two spaces. The hollow circles are the vectors to be imputed. The dotted circles are the embedding vectors already known in the semantic space but irrelevant.}\label{fig:LSI}
  \Description{$\mathbb{R}^d$ stands for $d$-dimensional affinity space, $\mathbb{R}^s$ stands for $s$-dimensional semantic space. }
\end{figure}
The state-of-the-art word embedding methods adopt a common practice of neglecting low-frequency words before training in order to minimize the effect of low-frequency words and ensure reliable results \cite{mikolov2013distributed}\cite{pennington2014glove}.   However, neither excluding these words with $\langle unk \rangle$ nor indiscriminately including them in the vocabulary works well because the valuable information of low-frequency words should be preserved for domain-specific language tasks.  For example, thousands of company names in financial news analysis, disease and medicine abbreviations in health-care problems, and functional and radical groups in chemistry. Sometimes, it is also desired to expand the vocabulary of word embedding to minimize the effect of out-of-vocabulary words. To quantify such problems, we may combine different embedding matrices or fuse data from multiple sources and transfer prior knowledge into the semantic space to enhance the word embedding. 

The underlying problem is to transfer entity representations from one Euclidean space to another. Suppose there is a set of entities $\mathcal{A} = \{a_i | i=0,...,m_1\}$ with their representation vectors in space $\mathbb{R}^{n_1}$ and a set of entities $\mathcal{B} = \{b_i | i=0,...,m_2\}$ whose representation vectors are in space $\mathbb{R}^{n_2}$, where $\mathcal{A} \cap \mathcal{B} \neq \phi$. We would like to approximate the entity representations of $\mathcal{A} \setminus \mathcal{B}$ (entities exist in $\mathcal{A}$ and not in $\mathcal{B}$) in $\mathcal{B}$'s representation space $\mathbb{R}^{n_2}$ or approximate the entity representations of $\mathcal{B} \setminus \mathcal{A}$ in space $\mathbb{R}^{n_1}$. 

To further elaborate the mechanisms of Latent Semantic Imputation,  we take the embedding of company names as an example. Some company names are relatively low-frequency words in corpora. Discarding these names will be a suboptimal choice since the names are likely to be essential for downstream language tasks.  On the other hand, adding them to the dictionary might lead to unreliable word embedding results.  We discovered that different types of information might be complementary to each other.  For example,  we can use the historical trading price of publicly listed companies or the financial features of companies to define another matrix that describes the entity similarity, other than the word embedding matrix.  Thereby, we transfer the rich information from this domain matrix to the word embedding matrix.  From the graph perspective, the graph defined by the domain matrix and the graph defined by the word embedding matrix often have similar topological structures.   Therefore,  we can apply the graph topology that is determined by the domain matrix to recover the local graph structure of the company names that are missing from the semantic space due to their low frequency. The domain matrix is determined by domain statistics and might vary according to the available data source; \textit{e.g.},  a domain matrix in chemistry is based on the co-occurrence of functional or radical groups in a basket of molecules, while  one in healthcare is derived from the medicine or disease features. 

To use  Latent Semantic Imputation to enhance domain word embedding, we take the following steps: (1) Identify a reliable embedding matrix as the base. The embedding is trained with a large corpus, \textit{e.g.}, billions of words or gigabytes of plain text.  (2) Identify or construct a domain matrix that describes the entities within the specific domain. (3) Apply latent semantic imputation to recover those embedding vectors that are not in the initial set of word embedding vectors. 

\section{Latent Semantic Imputation}
We obtained inspiration from manifold learning~\citep{belkin2003laplacian,maaten2008visualizing,saul2003think},  particularly one of the pioneering works, Locally Linear Embedding (LLE)~\citep{roweis2000nonlinear}.  Based on LLE, we argue that the words are in a lower-dimensional manifold embedded in the original high-dimensional space. A manifold is a topological space that locally resembles Euclidean space near each point, and therefore we need to think globally for the overall embedding context and fit locally for each single data point~\citep{saul2003think}.

We specifically denote the domain data matrix  $X \in \mathbb{R}^{n\times d}$, where $n$ is the entity number and $d$ is the dimension, and use $Y_{raw} \in \mathbb{R}^{m\times s}$ to represent the embedding matrix in the semantic space, where $m$ is the vocabulary size and $s$ is the dimension of the semantic space. Note that there are $p$ entities in the intersection of $Y_{raw}$ and $X$. Without loss of generality,   we permute the vectors to arrange the overlapping part to be the upper part of the two matrices.  The bottom $q$ vectors in $X$ will be reconstructed into  $Y_{raw}$. The remaining vectors of $Y_{raw}$ that are outside of the overlapping part are irrelevant. They remain unchanged, do not contribute any information and will be directly fed into downstream language tasks. Hence we temporally discard them.  Our goal is to find the missing $q$ vectors (or matrix $Y_q$) in $Y_{raw}$ given the domain matrix $X$ and the known $p$ vectors ($Y_p$) in the semantic space.  For simplicity, we use $Y \in \mathbb{R}^{n \times s}$ where $n=p+q$ to denote a vertical concatenation of $Y_p$ and $Y_q$, namely, the known embedding vectors and the missing embedding vectors in the semantic space. The whole process is essentially a data imputation. 
\begin{equation}
X=
\begin{bmatrix}
    X_p \\
    X_q \\
\end{bmatrix}
=
\begin {bmatrix}
    x_1^\top \\
    \vdots \\
    x_p^\top \\
    x_{p+1}^\top \\
    \vdots \\
    x_n^\top \\
\end {bmatrix}
\rightarrow
\begin {bmatrix}
    y_1^\top \\
    \vdots \\
    y_p^\top \\
    \hat{y}_{p+1}^\top \\
    \vdots \\
    \hat{y}_n^\top \\
\end {bmatrix}
=
\begin{bmatrix}
    Y_p \\
    Y_q \\
\end{bmatrix}
=
Y
\end{equation}
where $x_i \in \mathbb{R}^d$ and $y_i \in \mathbb{R}^s$ are the vectors in the affinity space and the semantic space, respectively. We denote vectors with bold letters like $\textbf{x}_i$ in the following sections.

Latent Semantic Imputation consists of three steps:\\
\textit{Step 1.} Build a minimum-spanning-tree $k$-nearest neighbor graph given the domain matrix $X$,\\
\textit{Step 2.} Solve for the sparse weight matrix $W$ using non-negative least squares with standard simplex constraints, and \\
\textit{Step 3.} Recover the missing embedding vectors $Y_q$ in the semantic space using the power iteration method.

\subsection{The Minimum-Spanning-Tree $k$-NN Graph}
Locally Linear Embedding argues that a data point is a linear combination of its $k$ nearest neighbors under the manifold assumption~\citep{roweis2000nonlinear}, which means $\textbf{x}_i = \frac{1}{n}\sum_{j=1}^{n}w_{ij} \textbf{x}_j, i\neq j$, where $w_{ij}\neq 0$ if $\textbf{x}_j$ belongs to the $k$ nearest neighbors of $\textbf{x}_i$ and zero otherwise.


Building a $k$-NN graph is the first step in LLE.  However, it is not sufficient in Latent Semantic Imputation. To elaborate on this, we first notice that a $k$-NN graph is not necessarily a connected graph and might contain some connected components with all vertices not having embedding vectors in the semantic space before imputation. In such cases, even if we have the topology derived from the domain matrix, these data points can still be arbitrarily shifted or rotated in the target semantic space because their mean and covariance are not constrained. These embedding results are obviously erroneous. In particular, the idea of Latent Semantic Imputation is to transfer the prior knowledge,  in the format of a domain matrix extracted from other data sources,  into the semantic space and impute those missing embedding vectors. The iterative process requires the graph being connected to propagate information, which is similar to the network flow~\citep{Ahlswede2000}. In a disconnected graph, some of the embedding vectors form data cliques (isolated subgraphs), and the information is not able to propagate through different subgraphs.

On the contrary,  it is desirable that the known embedding vectors serve as ``anchors'' ~\citep{LiuHC10} in the graph. In Figure \ref{fig:LSI}, $\textbf{x}_j$ is a vector in the affinity space, and its corresponding embedding vector in the semantic space is $\textbf{y}_j$.  In the process of imputation,  $\textbf{x}_j$ serves as an anchor to stabilize the positions of unknown word embedding vectors. To this end, we construct a Minimum-Spanning-Tree $k$-Nearest-Neighbor (MST-$k$-NN) graph which ensures the connectivity to describe the topological structure of the entities in the affinity space. We will show in a later section that a MST-$k$-NN graph guarantees deterministic convergence in the power iteration. 
That is, the final results do not depend on the initialization of $Y_q$.  


In the MST-$k$-NN graph construction, we first find a minimum spanning tree~\citep{Kruskal1956} based on the Euclidean distance matrix of entities given the domain matrix $X$. In a minimum spanning tree, some nodes can be dominated by one or two neighbors, which would be likely to introduce bias and sensitivity. Therefore on top of the minimum spanning tree, we search for additional nearest neighbor(s) of each node and add directed edge(s) between these additional neighbor(s) and the corresponding node if the in-degree of the node is smaller than $k$. After this step, the minimum degree of the graph $\delta(\mathcal{G})$ is $k$ (previously, some vertices in the minimum spanning tree can have a degree larger than $k$), and we denote this hyperparameter $\delta$. In practice, we did not find it necessary to  add degree constraints because the graph construction usually preserves the manifold assumption. In other words, the maximum node degree in the minimum spanning tree usually does not exceed $\delta$ or not exceed $\delta$ significantly.

Note that the graph from the minimum spanning tree is un-directed (symmetric adjacency matrix, or two nodes are mutually reachable if they have an edge), while the graph after the nearest neighbor search is directed because the nearest neighbor relationship is not communicative. 
\begin{algorithm}[t]
    \SetKwFunction{Union}{Union}\SetKwFunction{FindCompress}{FindCompress}\SetKwInOut{Input}{Input}\SetKwInOut{Output}{Output}
 
    \Input{($X$, $\delta$) \tcp*{$X$:domain matrix; $\delta$:minimum degree}}
    \Output{$\mathcal{G}=(V,E)$ }
    \BlankLine
    $A = EuclideanDistance(X)$; \\
    $\mathcal{G}=(V,E)\gets Kruskal(A)$;   \\
    \For{$i\gets1$ \KwTo $|V|$}{ 
        $V_i\gets \{v_j\ |\ (v_j, v_i) \notin E \}$; \\ 
        \While{$deg^-(v_i)<\delta$ \tcp*{$deg^-$: in-degree}}{ 
            $v_j = argmin(v_j)\ d(v_i, v_j), v_j \in V_i$; \\
            $E \gets E \cup \{(v_j, v_i)\}$;\\
            $V_i \gets V_i \setminus \{(v_j, v_i)\}$;\\
        }
    }
\caption{MST-$k$-NN Graph}
\end{algorithm}
\subsection{Non-Negative-Least-Squares to Solve $W$}
The second step of Latent Semantic Imputation is to resolve the optimal weights. In this step, we explicitly impose two constraints on the weights: being non-negative and normalized. These two constraints jointly define a standard simplex constraint to the least squares problem. Solving such problems produces a random walk matrix where the weight values are bounded by $0$ and $1$ and all row sums are $1$. Such matrices have the desired property that the absolute eigenvalues have the upper-bound $1$ because the dominant eigenvalue modulus is bounded by the smallest and largest row sums\cite{seneta2006non},  both of which are $1$. The objective function for solving the weight matrix $W$ in the affinity space is 
\begin{equation}
\begin{aligned}
& \underset{\textit{W}}{\text{argmin}}
& & \sum_{i=1}^{n}\left\|\textbf{x}_i - \sum_{j=1}^{n}w_{ij}\textbf{x}_j \right\|^2 \\
& \text{s.t.}
& & \sum_{j=1}^n w_{ij} = 1, i\neq j \\
& & &  w_{ij} \geq 0
\end{aligned}
\label{eqn:linear-combine}
\end{equation}
To solve this problem, note that it can be reduced to $n$ non-negative least squares problems each of which tries to solve for an optimal weight vector with the same constraints, 
$
\underset{\textbf{w}_i}{\text{argmin}}
\left\|\textbf{x}_i - \sum_{j=1}^{n}w_{ij}\textbf{x}_j \right\|^2 ,
$
since solving for weight vector $\textbf{w}_i$ has no influence on solving for $\textbf{w}_j$, $\forall i \neq j$. 

Lawson and Hanson first proposed the active set method to solve non-negative least squares problems~\citep{lawson1995solving}. Once the non-negative weight vectors are obtained~\citep{jones2014scipy}, they will be normalized. Note that the weight matrix $W \in \mathbb{R}^{n\times n}$ is sparse due to the manifold assumption. Furthermore, all column vectors $\textbf{w}_{:,j} \neq \textbf{0}, 1\le j \le n,$ because  the graph generated from the MST step is connected.

\subsection{Power Iteration Method to Solve $Y_q$ Matrix}
Our algorithm rests upon the assumption that the local structures are similar to each other between the affinity space and the semantic space, defined by the sparse weight matrix $W$.   Based on this assumption, we design a mapping that preserves the local structures embedded in the two different spaces,  fixes the known embedding vectors $Y_p$ and lets $Y_q$ adapt to the affinity information.  In this setting, $Y_p$ imposes additional constraints on the optimization problem. Based on the findings in transductive learning and label propagation~\citep{zhu2002learning,bengio200611}, the power iteration method is suitable for the imputation problem.  The final embedding vectors are a series of linear combinations of the dominant eigenvectors of the weight matrix. We denote the lower $q$ rows of the weight matrix as $W_q$ and the time step $t$, and apply the following iterative process to solve for $Y_q$:  
\begin{equation}
    Y_{q}^{(t+1)} = W_{q}Y^{(t)}.
\end{equation}

\subsubsection{Spectral Radius of a Square Matrix}
\label{sec:Spectral}

The spectral radius of a square matrix $M \in \mathbb{R}^{n \times n}$ is 
its largest absolute eigenvalue
$$
\rho(M) = max\{|\lambda_i|\ |i=1,...,n\}.
$$
According to the Perron-Frobenius theorem,  the eigenvalue modulo of $W$ retrieved from step $2$ is bounded by $1$\cite{seneta2006non}. To expand on this, first note that the non-negative weight matrix is irreducible 
because its associated graph is a strongly connected graph\cite{seneta2006non} that is constructed from a minimum spanning tree.  
Therefore,  $\rho(W)$ is bounded by its maximal and minimal row sums,  
\begin{equation}
    \underset{i}{min} \sum_{j=1}^nw_{ij} \leq \rho(W) \leq \underset{i}{max} \sum_{j=1}^nw_{ij}.
\end{equation}
The row sum of $W$ is always $1$ as a result of the normalization in solving $W$. Hence, the spectral radius $\rho(W)=1$. In fact, the dominant eigenvalue is $1$ and unique. This ensures convergence in the power iteration. To fix the known embeddings $Y_p$,  we set $W_p$ to identity, which results in $p$ eigenvalues being $1$ and the rest having a modulo strictly smaller than $1$. 

\subsubsection{Power Method for Eigenvectors}

The power method is often used for calculating the eigenvectors of a given square matrix $M \in \mathbb{R}^{n\times n}$. Given a random 
vector $\textbf{b}^{(0)} \in \mathbb{R}^n$, the power method iteratively multiplies the vector by  matrix $M$ and divides the product by its $l_\infty$ norm $\|\textbf{b}\|_\infty$.  The embedding vector $\textbf{b}^{(t)}$ eventually becomes the dominant eigenvector of matrix $M$. In our case, suppose we have a random vector that is a linear combination of $W$'s eigenvectors (at this moment, $W_P$ is not changed to identity and $W$ has a unique dominant eigenvalue), $\textbf{b}^{(0)} = \sum_{i=1}^n c_i\textbf{v}_i$, where $c_1$ is the weight of the dominant eigenvector and is usually nonzero.
There exists an eigengap between the dominant eigenvalue and the rest, $|\lambda_1| > |\lambda_i|, \forall i > 1$, and consequently, the weights of the non-dominant eigenvectors shrink rapidly and eventually become zero after a certain number of iterations. $\textbf{b}^{(t)}$ eventually becomes the dominant eigenvector $\textbf{v}_1$ of $W$:
$$
\textbf{b}^{(t)} = W^t \textbf{b}^{(0)} = W^t\sum_{i=1}^n c_i\textbf{v}_i = \sum_{i=1}^n c_iW^t\textbf{v}_i = \sum_{i=1}^n c_i\lambda_i^t\textbf{v}_i
$$
and 
$
\lim_{t \rightarrow \infty} c_i\lambda_i^t\textbf{v}_i = \textbf{0}\ \forall i>1,
$
thus,
$
\lim_{t \rightarrow \infty} \textbf{b}^{(t)} = c_1\lambda_1^t\textbf{v}_1 = c_1\textbf{v}_1. 
$
In our case, the initial weights of the dominant eigenvectors are in fact determined by the known embedding vectors.

\subsubsection{Implementation Details}
In practice, $Y_p$ is fixed, and only $Y_q$ needs to be updated during the iteration. Therefore, we set $W_p$ to identity, 
\begin{equation}
\left[
\begin{array}{c|c}
W_{pp} & W_{pq} \\
\hline
W_{qp} & W_{qq}
\end{array}
\right]
\rightarrow
\left[
\begin{array}{c|c}
I_p & 0 \\
\hline
W_{qp} & W_{qq}
\end{array}
\right]
\label{eqn:modified-weight-matrix}
\end{equation}
and then apply the power iteration to update the embedding matrix: $Y^{(t+1)} = WY^{(t)}$. 
 Note that changing the upper $p$ rows to identity does not change the spectral radius of $W$. The observation is derived from the characteristic equation of $W$. In fact, there will be $p$ eigenvectors associated with eigenvalue $1$ after this step,
 guaranteeing that the final embedding matrix is filled with linear combinations of  $W$'s dominant eigenvectors whose weights are determined  by the initial coefficients $c_i$ that are solely determined by $Y_p$. 
\begin{figure}
\begin{subfigure}{.45\linewidth}
\centering
\includegraphics[width=\linewidth]{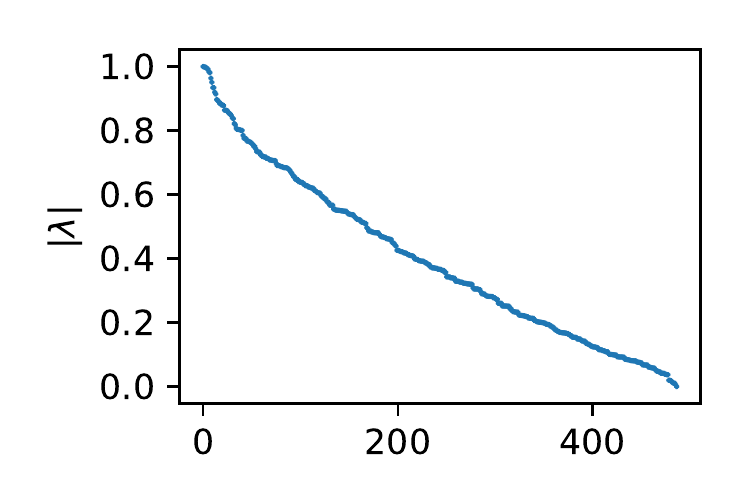}
\caption{}
\label{fig:eigw}
\end{subfigure}%
\begin{subfigure}{.45\linewidth}
\centering
\includegraphics[width=\linewidth]{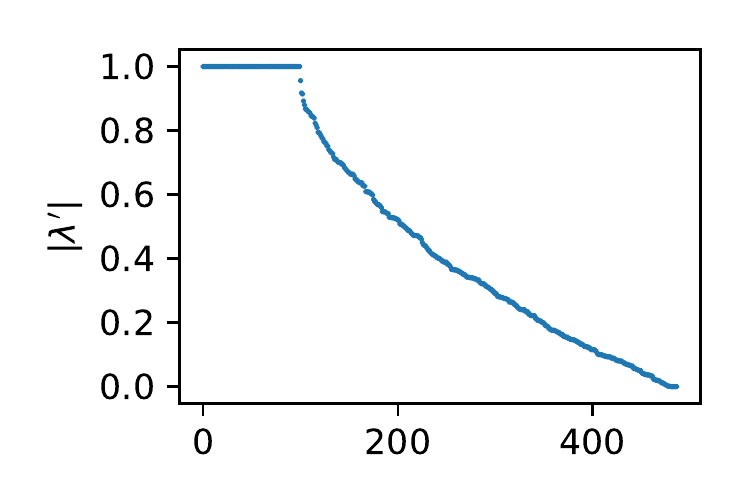}
\caption{}
\label{fig:eigw1}
\end{subfigure}\\
\begin{subfigure}{0.45\linewidth}
\centering
\includegraphics[width=\linewidth]{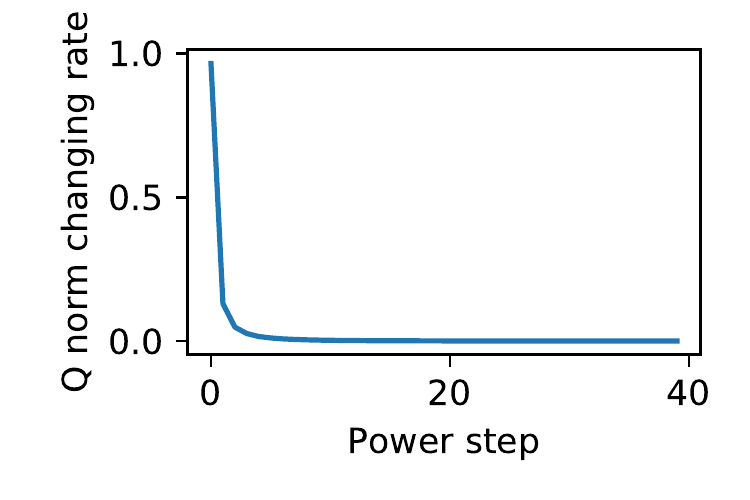}
\caption{}
\label{fig:converge}
\end{subfigure}
 \vspace{-0.05in}
\caption{(a) $|\lambda_i|$ of the original $W$ sorted in a descending order; (b) $|\lambda'_i|$ of $W$ with upper $p$ rows changed to identity (in this toy example, $p=100$, which means there are 100 vectors known and fixed, and there are $p$ eigenvalues that are one); (c) the convergence of the power iteration}
 \vspace{-0.15in}
\label{fig:convergence}
\end{figure}

The stopping criterion is the convergence of $Y_q$ when the $l_1$-norm changing rate of $Y_q$ between two iterations  falls under a predefined threshold $\eta$ or when the maximum number of iterations is reached. 
\begin{equation}
\frac{\left\|Y_q^{(t+1)} - Y_q^{(t)}\right\|_1}{\left\|Y_q^{(t)}\right\|_1} < \eta.
\end{equation}

The computational complexity of the LSI depends on the size of the domain matrix.  The Euclidean distance matrix calculation, $k$-NN search and quadratic programming are embarrassingly paralleled.  It takes less than a minute on a PC to perform LSI on a domain matrix with about 4k samples. The time complexity of building the Euclidean distance matrix is $O(dn^2)$ where $d$ is the dimension of the data space and $n$ is the sample number. The time complexity of building a minimum spanning tree is typically $O(|E|\log{}|V|)$.  It takes approximately $O(|V|^2)$ to  search for additional nearest neighbors for all nodes via partition, where $|V| = n$ (the sample number is equal to the number of vertices in the graph).  LSI only takes a marginal time compared to the general word embedding algorithms that usually require hours or even days for training. 
\begin{algorithm}[]
    \SetKwFunction{Union}{Union}\SetKwFunction{FindCompress}{FindCompress}\SetKwInOut{Input}{input}\SetKwInOut{Output}{output}
    
    \Input{($X$, $\delta$, $Y_p$, $\eta$)} 
    \Output{$Y$} 
    \BlankLine  \tcp*{$Y$: full embedding matrix; $\eta$: stopping criterion}
    initialize $Y_q$; \\
    $\mathcal{G}=(V,E)\gets MSTkNN(X, \delta)$;   \\
    $W \gets NNLS(\mathcal{G}, X)$ \tcp*{Non-Negative Least Squares}
    \For{$i\gets1$ \KwTo $|V|$}{
        $\textbf{w}_{i} \gets \frac{\textbf{w}_{i}}{\Sigma_j w_{ij}}$ \tcp*{normalize weights}
    }
    $W_p \gets I_{|V|p}$; \\
    \While{$\frac{\|Y_q^{(t+1)} - Y_q^{(t)}\|_1}{\|Y_q^{(t)}\|_1} \geq \eta$}{ 
        $Y \gets WY$  \tcp*{power iteration} 
    }
    \caption{Latent Semantic Imputation}
\end{algorithm}

\subsection{Deterministic Convergence of LSI}
To better illustrate the convergence properties of LSI, we follow the logic in~\cite{zhu2002learning}.
From the graph perspective, $I_p$ in Eqn \ref{eqn:modified-weight-matrix} means that the nodes in $\mathcal{G}_p$ have self-loops ($Y_p$ is fixed);  the upper right zero matrix indicates that there are no edges from $\mathcal{G}_q$ to $\mathcal{G}_p$ ($Y_q$ has no influence on $Y_p$); $W_{qp}$ represents the weighted edges from $\mathcal{G}_p$ to $\mathcal{G}_q$ (diffusion from $Y_p$ to $Y_q$); and $W_{qq}$ represents the weighted edges within $\mathcal{G}_q$ (diffusion within $Y_q$). We show that LSI guarantees deterministic convergence regardless of the initialization of $Y_q$.
\begin{theorem}\label{convergent}
If $W_{qq}$ is a convergent matrix, i.e., $\lim_{t \to \infty} W_{qq}^t=0$, then LSI guarantees deterministic convergence. 
\end{theorem}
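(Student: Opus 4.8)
The plan is to recognize the update for $Y_q$ as an affine linear iteration, solve it in closed form, and then pass to the limit. Writing $W$ in the block form of Eqn.~\ref{eqn:modified-weight-matrix} and substituting into $Y^{(t+1)} = WY^{(t)}$, the top block gives $Y_p^{(t+1)} = Y_p^{(t)}$, so the anchors remain fixed at their initial value $Y_p$, while the bottom block yields the inhomogeneous recursion
\begin{equation}
Y_q^{(t+1)} = W_{qp} Y_p + W_{qq} Y_q^{(t)}
\end{equation}
with constant forcing term $W_{qp}Y_p$.

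First I would unroll this recursion by induction on $t$ to obtain the explicit expression
\begin{equation}
Y_q^{(t)} = W_{qq}^{t} Y_q^{(0)} + \left( \sum_{k=0}^{t-1} W_{qq}^{k} \right) W_{qp} Y_p .
\end{equation}
The base case is immediate and the inductive step is a one-line substitution. This form cleanly separates the contribution of the arbitrary initialization $Y_q^{(0)}$, carried entirely by the factor $W_{qq}^{t}$, from the contribution of the fixed anchors, carried by the partial Neumann sum.

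Next I would take the limit $t \to \infty$ and treat the two terms separately. By the hypothesis that $W_{qq}$ is a convergent matrix, $W_{qq}^{t} \to 0$, so the first term vanishes and the dependence on $Y_q^{(0)}$ disappears completely; this is precisely the meaning of deterministic convergence. For the second term I would invoke the standard equivalence that a matrix is convergent iff its spectral radius is strictly below one, so $\rho(W_{qq}) < 1$, which guarantees both that $I - W_{qq}$ is nonsingular (the value $1$ is not an eigenvalue) and that the Neumann series $\sum_{k \geq 0} W_{qq}^{k}$ converges to $(I - W_{qq})^{-1}$. Passing to the limit then yields
\begin{equation}
\lim_{t\to\infty} Y_q^{(t)} = (I - W_{qq})^{-1} W_{qp} Y_p ,
\end{equation}
a closed form that is manifestly independent of the initialization, establishing the claim.

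The routine bookkeeping (the induction and the splitting of the limit) is easy; the conceptual crux is the equivalence $\lim_{t} W_{qq}^{t} = 0 \Leftrightarrow \rho(W_{qq}) < 1$ together with its consequence that the Neumann series sums to $(I - W_{qq})^{-1}$. I would emphasize that this is where the earlier spectral analysis is used: the construction of $W$ as a connected, row-stochastic random-walk matrix forces $\rho(W)=1$ with the eigenvalue $1$ attached only to the anchor block $I_p$, which is exactly what pushes every eigenvalue of $W_{qq}$ strictly inside the unit disk and supplies the convergent-matrix hypothesis in practice. If one wanted a fully self-contained argument rather than citing the equivalence, the main obstacle would be deducing $\rho(W_{qq}) < 1$ from $W_{qq}^{t} \to 0$ directly (for instance via the Jordan canonical form or Gelfand's formula); but since convergence of $W_{qq}$ is taken as the hypothesis, this collapses to a standard invertibility-and-Neumann-series argument.
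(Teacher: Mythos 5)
Your proposal is correct and follows essentially the same route as the paper: both unroll the block recursion $Y_q^{(t+1)} = W_{qp}Y_p + W_{qq}Y_q^{(t)}$ into $W_{qq}^t Y_q^{(0)}$ plus a partial Neumann sum, then kill the initialization-dependent term using $\lim_{t\to\infty} W_{qq}^t = 0$. Your write-up is in fact slightly more careful than the paper's, which leaves the convergence of the Neumann sum implicit, whereas you justify it via $\rho(W_{qq})<1$ and obtain the explicit fixed point $(I-W_{qq})^{-1}W_{qp}Y_p$.
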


\begin{proof}
Rewrite the power iteration as follows:
\begin{align}
  & \left[Y_p^{(t+1)}, Y_q^{(t+1)}\right] = \left[Y_P^{(t)}, W_{qp}Y_p^{(t)}+W_{qq}Y_q^{(t)}\right]. \\
 & \lim_{t \to \infty}Y_q^{(t)} = \lim_{t \to \infty}W_{qq}^tY_q^{(0)} + \left[\sum_{i=0}^{t-1}W_{qq}^{i-1} \right]W_{qp}Y_p.  
\end{align}

Given $\lim_{t \to \infty} W_{qq}^t=0$, the final embedding result is deterministic regardless of $Y_q^{(0)}$. 
\end{proof}

Now we show that under our algorithm settings, $W_{qq}$ is a convergent matrix. Note that $W_{qq}$ is a substochastic matrix, \textit{i.e.}, \\
$\exists i, \sum_{j} {(W_{qq})}_{ij} < 1\ and\ \forall i, 0\leq \sum_j {(W_{qq})}_{ij} \leq 1$.  This is a result of the minimum spanning tree where there exists one edge from $\mathcal{G}_p$ to $\mathcal{G}_q$ (one positive value in $W_{qp}$), and hence, there exists one row sum of $W_{qq}$ strictly less than $1$. In fact, under the MST-$k$-NN graph setting, there are many edges from $\mathcal{G}_p$ to $\mathcal{G}_q$ due to the additional nearest neighbor search. After all, these edges serve the major assumption of imputation and allow us to leverage the known embeddings for diffusion. Additionally, due to the minimum spanning tree step at the beginning, we have the following lemma: 

\begin{lemma}\label{connectivity}
For every node in $\mathcal{G}_q$, there always exists a path from $\mathcal{G}_p$ to this node.
\end{lemma}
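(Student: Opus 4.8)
The plan is to reduce the claim to the connectivity of the minimum spanning tree, and to exploit the fact that the tree edges are bidirectional while the subsequent nearest-neighbor search only ever adds edges. Concretely, let $T$ denote the spanning tree returned by Kruskal's algorithm in the MST-$k$-NN construction; $T$ is a connected subgraph of $\mathcal{G}$ that contains every vertex of $V=\mathcal{G}_p\cup\mathcal{G}_q$, and by construction each of its edges is undirected, i.e., traversable in both directions in the final $\mathcal{G}$.

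First I would fix an arbitrary node $v\in\mathcal{G}_q$ and, using $p\geq 1$, pick some anchor $u\in\mathcal{G}_p$. Since $T$ spans $V$ and is a tree, it contains a unique simple path $u=w_0,w_1,\ldots,w_\ell=v$. Each consecutive pair $\{w_i,w_{i+1}\}$ is a tree edge, hence a bidirectional edge of $\mathcal{G}$, so in particular the step $w_i\to w_{i+1}$ is a valid directed edge. Chaining these steps yields a directed path $u\to w_1\to\cdots\to v$ from $\mathcal{G}_p$ to $v$, which establishes the lemma.

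Next I would observe that the additional directed nearest-neighbor edges introduced after the MST step need never be invoked: adding edges can only enlarge the reachability relation, so the path exhibited inside $T$ survives in the final $\mathcal{G}$. Because the tree edges work in both orientations, the very same argument also produces a directed path from $v$ back to $u$, so the tree backbone is in fact strongly connected and whichever orientation the downstream convergence argument prefers is available. This is also why establishing the lemma, together with the substochasticity observation that some row sum of $W_{qq}$ is strictly below $1$, lets one conclude that $W_{qq}$ is convergent and thereby supplies the hypothesis of Theorem~\ref{convergent}.

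The main (and essentially the only) obstacle is bookkeeping the directedness of $\mathcal{G}$: the final graph is directed, so one must not silently treat an arbitrary undirected path in $T$ as directed reachability. The clean resolution is to rely solely on the MST edges, which are bidirectional by the stated construction, and to avoid reasoning about the genuinely directed neighbor edges altogether. A minor but necessary point to state explicitly is that $\mathcal{G}_p\neq\emptyset$, without which ``a path from $\mathcal{G}_p$'' would be vacuous; this holds because at least one known embedding anchor is assumed.
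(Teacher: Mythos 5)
Your proof is correct and takes essentially the same route as the paper: the paper gives no standalone proof, asserting only that the lemma holds ``due to the minimum spanning tree step,'' and your argument---spanning-tree connectivity combined with the bidirectionality of the MST edges inside the otherwise directed MST-$k$-NN graph, with the extra nearest-neighbor edges only enlarging reachability---is precisely the reasoning that assertion compresses. Your explicit bookkeeping of edge directedness and the observation that $\mathcal{G}_p\neq\emptyset$ (since $p\geq 1$ overlapping anchors exist) merely fill in details the paper leaves implicit.
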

\begin{definition}[\textbf{Sink Node}]
Let $r_i = \sum_j(W_{qq})_{ij}$, the $i$-th row sum. A sink node in a substochastic matrix is one with $r_i < 1$.
\end{definition}
Given Lemma~\ref{connectivity}, we have the following corollary:
\begin{corollary}
For every node in $\mathcal{G}_q$, either it is a sink node or there exists a path from a sink node to it, or both.
\end{corollary}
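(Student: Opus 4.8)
The plan is to obtain the corollary directly from Lemma~\ref{connectivity} by walking along the guaranteed $\mathcal{G}_p$-to-$v$ path and pinpointing where it enters $\mathcal{G}_q$.

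First I would translate the analytic definition of a sink node into a combinatorial one. Because the weights are normalized so that every row of $W$ sums to $1$, for any node $i \in \mathcal{G}_q$ we have $\sum_j (W_{qp})_{ij} + \sum_j (W_{qq})_{ij} = 1$, hence $r_i = \sum_j (W_{qq})_{ij} = 1 - \sum_j (W_{qp})_{ij}$. Thus $r_i < 1$ exactly when $\sum_j (W_{qp})_{ij} > 0$, i.e.\ when node $i$ has at least one positively weighted edge coming from $\mathcal{G}_p$. In short, a node of $\mathcal{G}_q$ is a sink node if and only if it receives a direct edge from $\mathcal{G}_p$. This equivalence is what lets me convert a topological fact about paths into the analytic statement of the corollary.

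Next I would fix an arbitrary $v \in \mathcal{G}_q$ and invoke Lemma~\ref{connectivity} to obtain a path $u_0 \to u_1 \to \cdots \to u_k = v$ with $u_0 \in \mathcal{G}_p$, the arrows following the edge directions. I would then take the maximal suffix $u_\ell \to \cdots \to u_k$ whose vertices all lie in $\mathcal{G}_q$; this suffix is well defined because $u_k = v \in \mathcal{G}_q$ while $u_0 \in \mathcal{G}_p$, so the path crosses into $\mathcal{G}_q$ at least once. By maximality of the suffix its predecessor $u_{\ell-1}$ lies in $\mathcal{G}_p$, so the edge $u_{\ell-1} \to u_\ell$ places a positive entry in $W_{qp}$, and by the equivalence above $u_\ell$ is a sink node. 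The remaining portion $u_\ell \to \cdots \to v$ is then a path living entirely in $\mathcal{G}_q$ that leads from the sink node $u_\ell$ to $v$. When $\ell = k$ this suffix is trivial and $v$ is itself a sink node; otherwise $v$ is reached from the sink node $u_\ell$ along a nontrivial path. In both cases the disjunction asserted by the corollary holds.

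The one point I expect to require care is the choice of crossing vertex: using the \emph{first} entry into $\mathcal{G}_q$ would only suffice if no edges ran from $\mathcal{G}_q$ back to $\mathcal{G}_p$, which is indeed the case for the modified matrix in Eqn~\ref{eqn:modified-weight-matrix} (its upper-right block is zero), but taking the maximal $\mathcal{G}_q$-suffix makes the argument robust and independent of that fact. Beyond this, the proof relies only on the row-sum normalization and Lemma~\ref{connectivity}, both already in hand, so I anticipate no substantive difficulty.
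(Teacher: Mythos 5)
Your proof is correct and takes essentially the same route as the paper, which states this corollary without a separate proof as an immediate consequence of Lemma~\ref{connectivity} together with the row-normalization of $W$: a node of $\mathcal{G}_q$ receiving an edge from $\mathcal{G}_p$ has $W_{qq}$-row sum strictly below $1$, so the point where the guaranteed path crosses into $\mathcal{G}_q$ is a sink node and the remaining segment is the required path. Your choice of the maximal $\mathcal{G}_q$-suffix (rather than the first crossing) is a sound extra precaution, not a different argument.
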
 

\begin{lemma}\label{substochastic}
For a substochastic matrix, for every non-sink node, if there exists a path from a sink node to this non-sink node, then the substochastic matrix is  convergent. 
\end{lemma}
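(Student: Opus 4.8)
The plan is to reduce the claim to a statement about the spectral radius and then invoke the Perron--Frobenius machinery already used in Section~\ref{sec:Spectral}. Write $M=W_{qq}$ for the substochastic matrix in question, with row sums $r_i=\sum_j M_{ij}\le 1$ and at least one $r_i<1$. I would first recall the standard fact that a square matrix satisfies $\lim_{t\to\infty}M^t=\mathbf{0}$ if and only if $\rho(M)<1$, so it suffices to rule out $\rho(M)=1$. Since $M\ge\mathbf{0}$, we have $\rho(M)\le\|M\|_\infty=\max_i r_i\le 1$, and the entire task is to upgrade this to a strict inequality.

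So I would argue by contradiction, assuming $\rho(M)=1$. By the Perron--Frobenius theorem for nonnegative matrices, $\rho(M)=1$ is then an eigenvalue admitting a nonnegative eigenvector $\mathbf{u}\ge\mathbf{0}$, $\mathbf{u}\ne\mathbf{0}$, with $M\mathbf{u}=\mathbf{u}$. Let $\mu=\max_i u_i>0$ and define the argmax set $S=\{i:u_i=\mu\}$, which is nonempty. The key computation would show that for any $i\in S$,
\begin{equation}
\mu=u_i=\sum_j M_{ij}u_j\le\mu\sum_j M_{ij}=r_i\mu\le\mu,
\end{equation}
forcing both inequalities to be equalities. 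From $r_i\mu=\mu$ with $\mu>0$ I get $r_i=1$, so \emph{every node of $S$ is a non-sink}; and from $\sum_j M_{ij}(\mu-u_j)=0$ with nonnegative summands I get $u_j=\mu$, i.e.\ $j\in S$, whenever $M_{ij}>0$. Thus $S$ is a nonempty set of non-sinks closed under the relation $M_{ij}>0$.

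Next I would translate this closure property into the graph language of the paper. Under the convention fixed by the graph construction (an edge $j\to i$ is present in $\mathcal{G}$ exactly when $M_{ij}>0$), the implication ``$i\in S$ and $M_{ij}>0\Rightarrow j\in S$'' says precisely that every edge whose head lies in $S$ has its tail in $S$; equivalently, \emph{no edge of $\mathcal{G}$ enters $S$ from outside $S$}. Now I invoke the hypothesis (supplied in our setting by the preceding corollary): pick any $i_0\in S$; being a non-sink, it is reachable in $\mathcal{G}$ by a path $s=v_0\to v_1\to\cdots\to v_\ell=i_0$ whose origin $s$ is a sink. Since sinks satisfy $r_s<1$ while every node of $S$ has row sum $1$, we have $s\notin S$, so the path starts outside $S$ and ends inside it; letting $k$ be the first index with $v_k\in S$, the edge $v_{k-1}\to v_k$ enters $S$ from outside, contradicting the closure property. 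Hence $\rho(M)=1$ is impossible, $\rho(M)<1$, and $M=W_{qq}$ is convergent.

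The main obstacle is bookkeeping the edge orientation: matrix multiplication reads $M_{ij}$ as the influence of $j$ on $i$, whereas the constructed graph $\mathcal{G}$ orients that same relation as $j\to i$, so the Perron--Frobenius closure (closed under $M$-transitions) must be matched carefully against the ``path from a sink'' hypothesis, which lives in $\mathcal{G}$ — the contradiction only surfaces once the two directions are aligned. A fully equivalent route, closer to the paper's random-walk narrative, is to adjoin a single absorbing ``leak'' state collecting the deficit $1-r_i$ at each node, turning $M$ into an honest stochastic matrix; then $(M^t)_{ij}$ is the probability of surviving $t$ steps without absorption, and reachability of a sink from every node forces absorption with probability one, hence $M^t\to\mathbf{0}$. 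I would keep the Perron--Frobenius argument as primary, since it is self-contained and reuses tools already present in the paper.
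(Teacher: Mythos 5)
Your proof is correct, but it takes a genuinely different route from the paper's. The paper argues directly on the row sums of the powers of $W_{qq}$: writing $r_i^{(t)}$ for the $i$-th row sum of $W_{qq}^t$, it first shows $r_{k^*}^{(t)} \leq r_{k^*} < 1$ for every sink node $v_{k^*}$, then propagates the strict deficit along the hypothesized path: if the shortest path from a sink $v_{k^*}$ to a non-sink $v_i$ has $m$ steps, then $\left(W_{qq}^{(m)}\right)_{ik^*} > 0$ and the term $\left(W_{qq}^{(m)}\right)_{ik^*} r_{k^*}$ forces $r_i^{(m+1)} < r_i^{(m)} \leq 1$; finiteness of the graph then yields a finite $T$ with all row sums of $W_{qq}^{T}$ strictly below $1$, i.e.\ $\left\|W_{qq}^{T}\right\|_\infty < 1$, whence $W_{qq}^t \to 0$. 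You instead reduce convergence to $\rho(W_{qq}) < 1$ and exclude $\rho = 1$ by contradiction, via the weak Perron--Frobenius theorem (the right version here, since $W_{qq}$ is in general reducible, so you correctly avoid assuming irreducibility) combined with a maximum-principle argument on the argmax set $S$ of the nonnegative eigenvector: every node of $S$ has row sum $1$, hence is a non-sink, and no edge of $\mathcal{G}$ enters $S$ from outside, contradicting reachability from a sink; your orientation bookkeeping ($M_{ij} > 0$ iff edge $j \to i$, so a length-$m$ path from $k^*$ to $i$ iff $\left(W_{qq}^{(m)}\right)_{ik^*} > 0$) matches the paper's convention exactly. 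The trade-offs: the paper's computation is elementary and quantitative, giving an explicit finite horizon tied to path lengths (hence a convergence rate), although its final step (``because our graphs are always finite'') leaves the passage from shrinking row sums to $W_{qq}^t \to 0$ implicit --- precisely the step your equivalence $\rho(M) < 1 \Leftrightarrow M^t \to \mathbf{0}$ makes airtight; your spectral route is cleaner and reuses the framing of Section~\ref{sec:Spectral}, but is nonconstructive. Your closing remark about adjoining an absorbing ``leak'' state is the standard absorbing-Markov-chain reading of the same fact and aligns with the paper's random-walk narrative, so either formalization would fit the paper.
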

\begin{proof}
To show $\lim_{t \to \infty} W_{qq}^t=0$, we need to show 
$$\forall i, \lim_{t \to \infty} r_i^{(t)} =  \lim_{t \to \infty} \sum_{j=1}^q \left(W_{qq}^{(t)}\right)_{ij} = 0,$$
or $\forall i,$ for a finite $t$
$$ \sum_{j=1}^q \left(W_{qq}^{(t)}\right)_{ij}  < 1.$$
 For every sink node $v_{k^*}$ in $\mathcal{G}_q$, we have $r_{k^*} < 1$. And $\forall t > 1,$
\begin{eqnarray*}
r_{k^*}^{(t)} &=& \sum_{k=1}^q\sum_{j=1}^q(W_{qq})_{k^*j} \left(W_{qq}^{(t-1)}\right)_{jk} \\
& = & \sum_{j=1}^q(W_{qq})_{k^*j}\sum_{k=1}^q\left(W_{qq}^{(t-1)}\right)_{jk} = \sum_{j=1}^q(W_{qq})_{k^*j}r_j^{(t-1)}. 
\end{eqnarray*}
Since we have $\forall i, \forall t>0, r_i^{(t)}\leq 1$, 
$$
r_{k^*}^{(t)} = \sum_{j=1}^q (W_{qq})_{k^*j}r_j^{(t-1)} \leq \sum_{j=1}^q (W_{qq})_{k^*j} = r_{k^*} < 1. 
$$
Thus, the convergence is apparently true for those sink nodes. Suppose the shortest path (with all positive edges) from a sink node $v_{k^*}$ to a non-sink node $v_i$ within $\mathcal{G}_q$ has $m$ steps. Then, we have 
$$
\left(W_{qq}^{(m)}\right)_{ik^*} > 0
$$
and 
$$
r_{k^*} < 1. 
$$
Hence, the following condition holds: 
$$r_i^{(m+1)} = \sum_{j=1}^q \left(W_{qq}^{(m)}\right)_{ij} r_j < \sum_{j=1}^q \left(W_{qq}^{(m)}\right)_{ij} = r_i^{(m)} \leq 1, i\neq k^*.$$
Because our graphs are always finite, the convergence also holds for the non-sink nodes. 
\end{proof}

Combining Lemma~\ref{connectivity} and \ref{substochastic}, we conclude that  $W_{qq}$ is a convergent matrix  under our algorithm settings. Hence, LSI guarantees a deterministic convergence. This property does not always hold for a $k$-NN graph. This is why we start with a minimum spanning tree in the graph construction.

\section{Experiments} 

We applied two types of  evaluation methods for word embedding: intrinsic evaluation and extrinsic evaluation~\citep{schnabel2015evaluation}, to validate LSI. Categorization is an intrinsic evaluation in which word embedding vectors are clustered into different categories. Clustering is also one of the principles for designing the hierarchical softmax word embedding model~\citep{mnih2009scalable}.  The extrinsic evaluation involves downstream language tasks. In our experiments, we used the $k$-NN classifier on a group of embedding vectors and inspect the accuracy. We also fed the embeddings as input to long short-term memory (LSTM) networks to investigate the performance in terms of  the  language modeling perplexity. 
 

In the first experiment,  we used word2vec~\citep{mikolov2013distributed} to train word embeddings on the Wiki corpus.   In this dataset,  each company has an industry category label, \textit{e.g.} Google belongs to the IT industry, while Blackrock belongs to the financial industry, and these labels are used in the $k$-Nearest-Neighbors classification.  We conjecture that if the latent semantic imputation works efficiently and the extra knowledge is successfully transferred from the affinity space into the semantic space, the $k$-NN classification accuracy on the imputed embedding matrix will be higher than that of the original embedding matrix purely based on the training corpus. Furthermore, we  conducted experiments based on Google word2vec~\citep{mikolov2013distributed}, Glove~\citep{pennington2014glove}, and fastText~\citep{bojanowski2016enriching} pretrained word embeddings for comparison. 
This is a multi-class classification problem with a total of eleven categories representing eleven industry sectors. 


The second experiment is on language modeling. Many efforts have shown that a reliable word embedding boosts the performance of  neural-network-based language tasks~\citep{lai2015recurrent,kim2014convolutional,ganguly2015word}. 
Indeed, the embedding vectors are essentially the weight matrix of the neural network's first hidden layer~\citep{bengio2003neural,mikolov2013linguistic}.
We used the imputed embeddings as the input to LSTM~\citep{hochreiter1997long,sundermeyer2012lstm} to further evaluate the efficacy of LSI enhancing the word embeddings.  

\subsection{Classification on Embedding Vectors}
We gathered the Wikipedia URLs of the S\&P500 companies and then their associated external links in these Wiki pages.   These linked Wiki pages are directly relevant to the S\&P500 companies.  We downloaded about 50k wiki pages and treated them as the corpus for training the word embeddings.  We removed the punctuation marks and stopwords from the corpus, converted all upper-case letters to lower case,  and tokenized n-gram company names in these wiki pages. We followed the TensorFlow word2vec~\citep{mikolov2013efficient} example with some minor changes to the hyperparameters. We set the learning rate to 0.05, the total number of steps to five million (when evaluation words have meaningful neighbors) and the minimal word occurrence threshold to 40. All other hyperparameter values remain the same as those in the original example. Since the S\&P500 components evolve, we fixed the duration of the time series  on the historical stock price to construct the domain matrix and finally chose 487 long-term stable companies.   During the pre-processing, 384 company names had a frequency greater than 40, and the rest were excluded from the vocabulary because of their low frequency. The vocabulary size is 50,000. We identified another vocabulary that is a combination of the vocabulary we just described and the remaining  company names with low frequency for experimental comparison. 

The  word2vec embedding in our experiments is pre-trained from 100 billion words, and there are  300 dimensions and 3 million words in the vocabulary. The Glove pre-trained embedding is based on 840 billion words and has 300 dimensions and a vocabulary of 2 million words. The fastText pre-trained embedding is based on 600 billion words and has 300 dimensions with a vocabulary size of 2 million.
\begin{figure}
\begin{subfigure}{.5\linewidth}
\centering
\includegraphics[width=\linewidth]{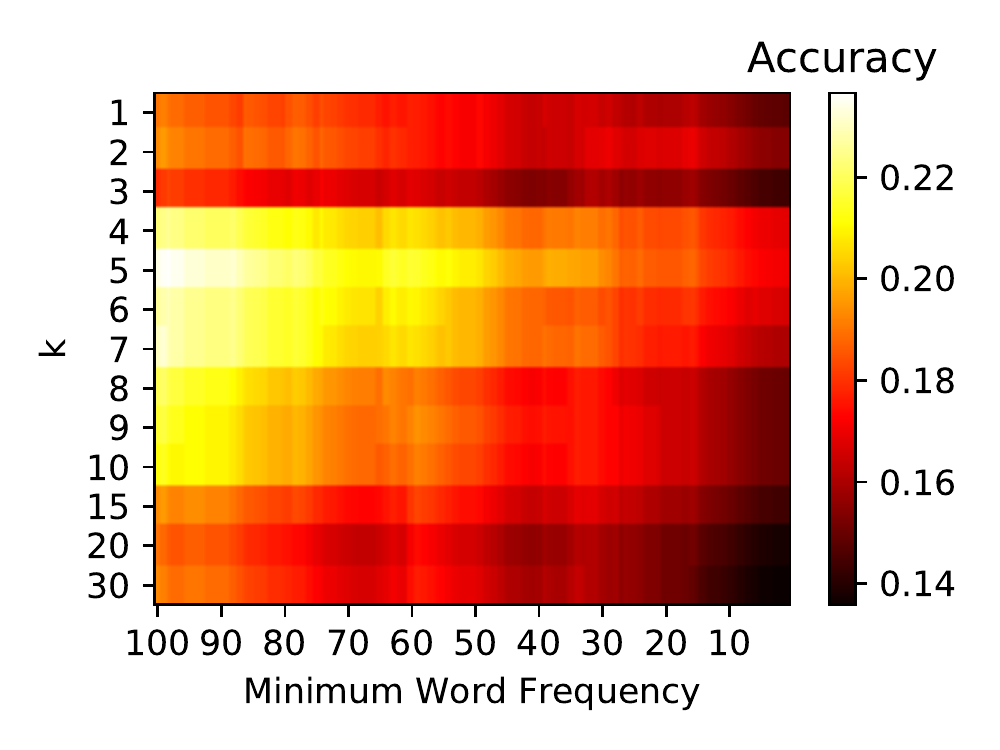}
\caption{}
\label{fig:heat_before}
\end{subfigure}%
\begin{subfigure}{.5\linewidth}
\centering
\includegraphics[width=\linewidth]{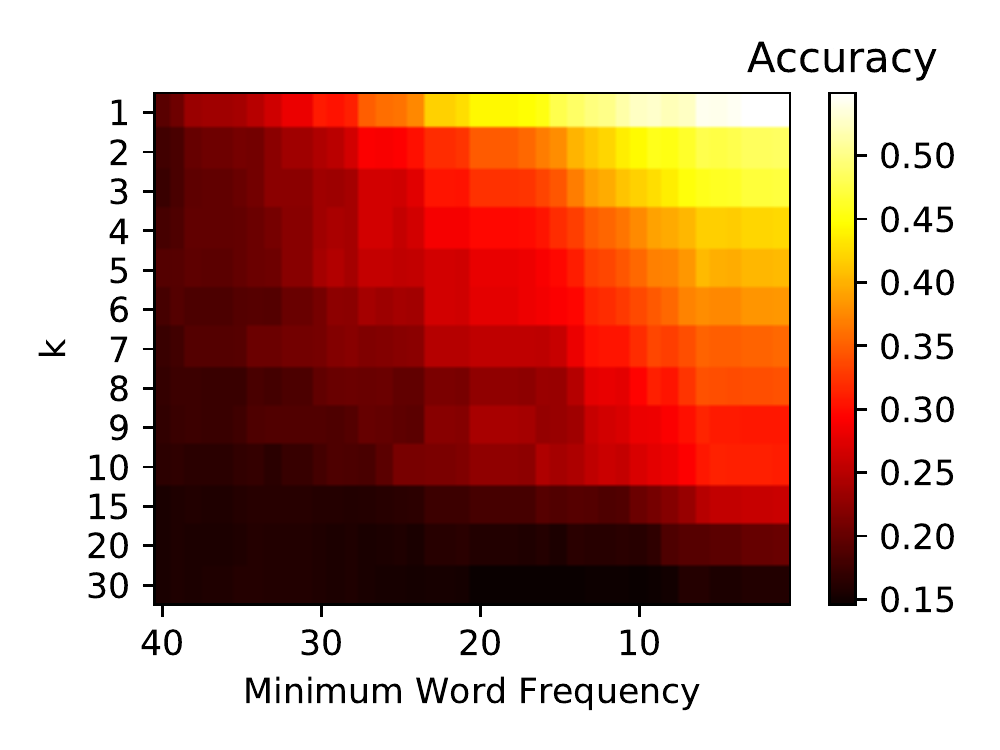}
\caption{}
\label{fig:heat_after}
\end{subfigure}
\caption{$k$-NN accuracy on word embedding vectors. (a) Self-trained embedding on Wiki corpus (b) Self-trained embedding on the same Wiki corpus combined with the affinity knowledge via LSI.}
\label{fig:heat}
\end{figure}


To transfer the external knowledge to the semantic space, we gathered the historical daily stock returns  of the S\&P500 component companies from 2016-08-24 to 2018-08-27 and used the correlation matrix as the affinity matrix (domain matrix). The minimal degree of the graph $\delta$ is set to be 8 and the stopping criterion $\eta$ is set to be $1e^{-2}$ in the initial experiment based on our observation  that the algorithm is relatively robust to the values of $\delta$ and $\eta$. 
\begin{table}
  \caption{$k$-NN accuracy (\%) on embedding vectors}
  \scalebox{0.9}{
  \begin{tabular}{c | c c c c c c c c }
    \toprule
    \diagbox[]{$E$}{$k$} & 2 & 5 & 8 & 10 & 15 & 20 & 30\\
    \midrule
    self & 0.154 & 0.170 & 0.150 & 0.150 & 0.144 & 0.138 & 0.135 \\
    self(hf) & 0.180 & 0.190 & 0.172 & 0.167 & 0.157 & 0.157 & 0.157 \\
    \textbf{self(hf)+aff} & 0.556 & 0.472 & 0.396 & 0.359 & 0.302 & 0.261 & 0.187 \\
    Google & 0.220 & 0.297 & 0.271 & 0.305 & 0.280 & 0.280 & 0.186\\
    \textbf{Google+aff} & 0.838 & 0.803 & 0.784 & 0.768 & 0.725 & 0.678 & 0.626 \\
    Glove & 0.417 & 0.466 & 0.490 & 0.500 & 0.500 & 0.505 & 0.451 \\
    \textbf{Glove+aff} & 0.832 & 0.766 & 0.690 & 0.653 & 0.606 & 0.542 & 0.405 \\
    fast & 0.443 & 0.496 & 0.527 & 0.500 & 0.511 & 0.470 & 0.447 \\
    \textbf{fast+aff} & 0.811 & 0.749 & 0.713 & 0.684 & 0.641 & 0.608 & 0.595\\
    \bottomrule
  \end{tabular}}
  \label{tab:freq}
\end{table}

Figure~\ref{fig:heat_before} shows a clear pattern that as the minimum word frequency decreases, the $k$-NN accuracy decreases no matter how $k$ changes.  The pattern indicates that the low-frequency words are associated with the embedding vectors with low quality.  Table~\ref{tab:freq}  also shows that the self-embedding without low-frequency words (self(hf)) yields better performance.  From Figure~\ref{fig:heat_after}, we observed that the accuracy of $k$-NN  improve once the LSI is introduced, \textit{i.e.}, the embedding with LSI outperforms the vanilla embedding purely based on the training corpus by a large margin.  The $k$-NN accuracy also demonstrated improvement when we applied LSI to impute the subset of company names in the word2vec, Glove, and fastTest pre-trained embeddings. Such an improvement is evident in the low-frequency words. 

To verify whether LSI is robust to its hyper-parameter $\delta$ and stopping criterion $\eta$, we did multiple investigations.  When we set $\eta = 1e^{-2}$ and let $\delta$ vary, we observed that LSI is relatively robust to a varying $\delta$ under the constraint that $\delta$ is not too large in which case the manifold assumption is significantly violated, or too small, which causes one or two neighbors to dominate.  When we set the minimum degree of the graph $\delta=8$ and let $\eta$ vary, we also had the same observation that the LSI is robust to the stopping criterion $\eta$. 
\begin{figure}
\begin{subfigure}{.5\linewidth}
\centering
\includegraphics[width=\linewidth]{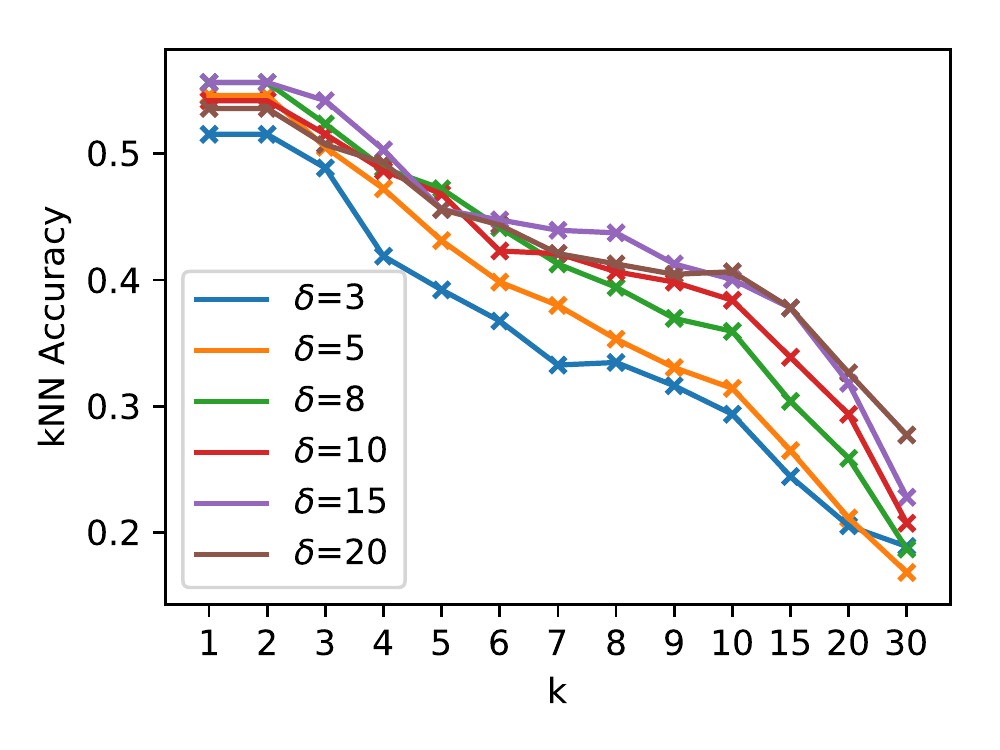}
\caption{Sensitivity of $\delta$}
\label{fig:delta sens}
\end{subfigure}%
\begin{subfigure}{.5\linewidth}
\centering
\includegraphics[width=\linewidth]{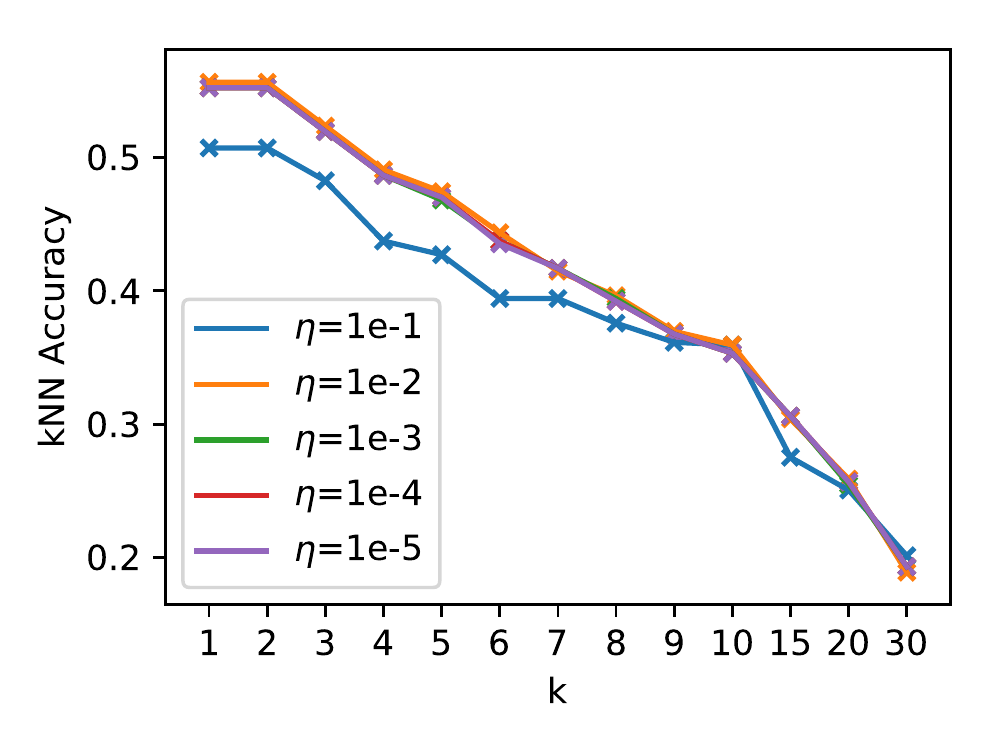}
\caption{Sensitivity of $\eta$}
\label{fig:eta sens}
\end{subfigure}
\caption{Sensitivity Tests}
\label{fig:sensitivity test}
\end{figure}





\subsection{Language Modeling on News Headlines}
The straightforward way to measure the efficacy of different word embeddings is to feed them into a language model and inspect the perplexity. Language models are fundamental for many language tasks, including speech recognition, machine translation, parsing, and handwriting recognition. The goal of language modeling is to find the next most likely word following a word sequence, \textit{i.e.}, to maximize the conditional probability $P(w_t|w_1, w_2, ..., w_{t-1})$.  Embedding words into a continuous Euclidean space~\citep{bengio2003neural} is essential for the performance of language models.

We retrieved an experiment dataset from the Wharton Research Data Services (WRDS) and split the dataset into three folds: the training fold, which contains about 37.5k financial news headlines, and the validation and testing folds, each of which contains about 8k headlines. The corpus includes several thousand company names that are publicly listed on NYSE and NASDAQ.  The dataset is small and it is difficult to attain reliable word embedding from the corpus directly. We adopted the three embeddings pre-trained on large corpora that also appeared in the previous experiment,  used them as the initial embeddings to be enriched with domain knowledge,  and recovered the embedding vectors for those missing company names via LSI.   

In the pre-processing,  we replaced the tokens with a frequency smaller than 3 with $\langle unk\rangle$ and used $\langle eos\rangle$ to denote \textit{End of Sentence}.  Those tokens that appear in the validation and testing folds and not in the training fold are also replaced with $\langle unk\rangle$.  The historical daily stock price data covers thousands of companies listed on NYSE and NASDAQ. We took the time series for 400 trading days ending on November 1, 2018, and removed those companies with more than 20\% missing values. We used the daily return of these companies and filled the missing values with the mean of each day's company return.  The final data for the domain matrix contain 4092 companies, of which more than 3000 of these company names are not in the pre-trained embeddings. 

We also trained an embedding matrix from scratch using the TensorFlow word2vec example code on the training set for language modeling. 
The learning rate is set to 0.1, the number of steps to two million, the minimum word frequency to 10, and the  $\sigma$ in the Gaussian initialization to 0.1. We name the obtained embedding ``self-embedding'' and use it as a baseline. The vocabulary for all embedding matrices is the same.   For those missing vectors that can not be recovered by LSI,  we randomly initialize them according to a Gaussian distribution with $\sigma=0.3$ (the standard deviations of the word2vec, Glove, and fastText pre-trained embeddings along one dimension are 0.16, 0.38 and 0.25  respectively), and these randomly initialized word vectors will be updated by the neural net during training.

Our language model follows the TensorFlow benchmark for building a  LSTM model on the Penn Bank Tree (PTB) dataset (small model settings) with some minor changes to the hyperparameters. We allow the model to use the pre-trained embedding and let it update the embedding vectors during training.   We set the initial learning rate to 0.5 and let it decrease with a decay rate of 0.5 after two epochs.  We found that the validation perplexity stopped declining and instead climbed up gradually at the end of epoch five, and thereby, we did an early stopping on the training. The language model with each embedding is trained  ten times, and the result in Table~\ref{pp} is the average decrease in perplexity.  
\begin{table}
  \caption{Language Model Perplexity of Different Embeddings}
  \begin{tabular}{c|cc}
    \toprule
    Embedding & Test PP & \%decrease\\
    \midrule
    self & 13.093 &  \\
    \textbf{self+Google} & 12.742 & 2.75 \\
    \textbf{self+fastText} & 12.477 & 4.94 \\
    \textbf{self+Glove} & 12.646 & 3.54 \\
    Google & 12.431 & 5.33 \\
    fastText & 12.215 & 7.19 \\
    Glove & 12.218 & 7.16 \\
    \textbf{self+aff} & 11.883 & 10.18 \\
    \textbf{Google+aff} & 11.646 & 12.42 \\
    \textbf{fastText+aff} & 11.638 & 12.51 \\
    \textbf{Glove+aff} & 11.510 & 13.76 \\
    \bottomrule
  \end{tabular}
  \label{pp}
\end{table}

\begin{figure*}[ht]
  \centering
  \includegraphics[width=\textwidth]{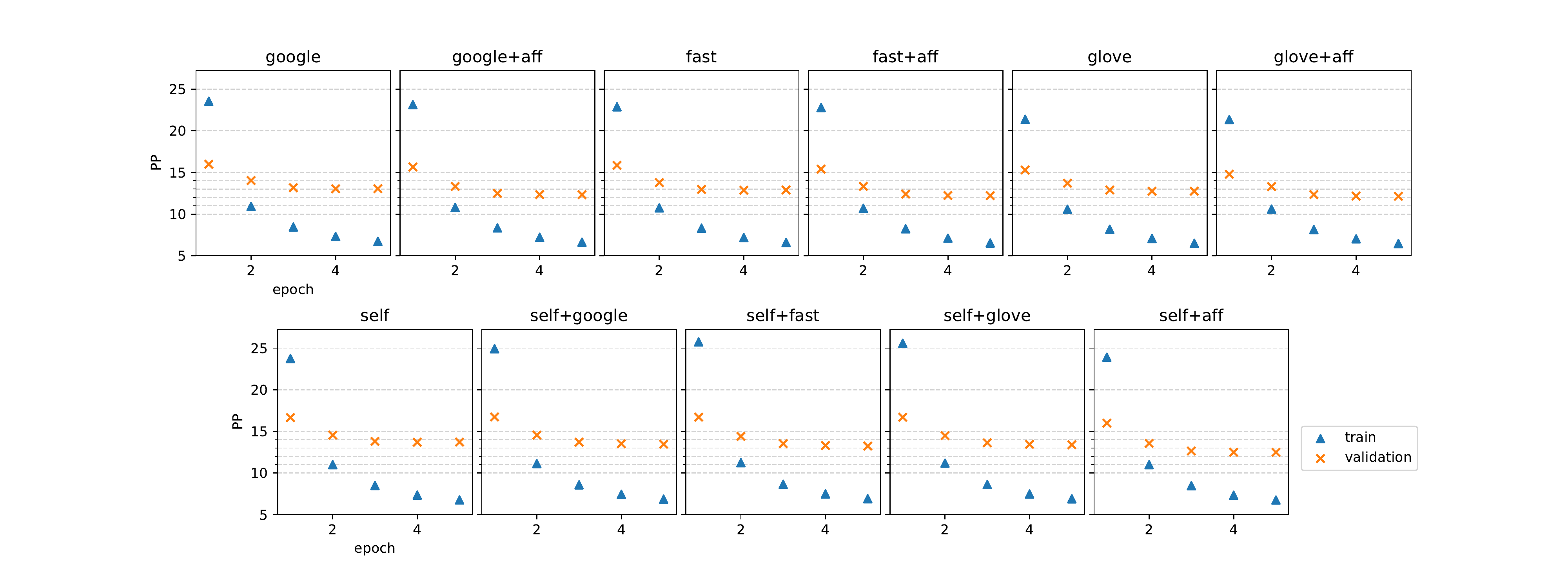}
  \vspace{-0.3in}
  \caption{Training and Validation Perplexity with Different Embeddings}
 \vspace{-0.15in}
\end{figure*}

The experiment results show that the LSTM with self-embedding has the largest testing perplexity.  In Table~\ref{pp}, we observed that the self-embedding plus general-purpose pre-trained embeddings via LSI leads to a 2-5\% perplexity decrease.  The observation demonstrates that when the corpus is small, it is difficult to obtain a reliable word embedding.  In contrast,  the pre-trained embedding on a vast corpus, such as Google-news with 100 billion words, is a better choice. However, the pre-trained embedding model on a large corpus is for the general purpose and will skip many keywords that are crucial to  domain-specific language tasks.  To obtain a  more reliable domain-aware word embedding,  we need to identify a domain-specific matrix and inject it into the general-purpose embedding.  The Google-news (w2v), Glove and fastText pre-trained embeddings plus the domain matrix via LSI obtained an about 5\% decrease in perplexity  compared with those without LSI and a more than 12\% decrease compared with the self-trained embedding.

\section{Discussion and Conclusions}
In this paper, we present a novel approach, Latent Semantic Imputation, for enhancing domain word embedding. It is capable of  transferring prior knowledge from a domain matrix to the semantic space.  LSI provides a novel approach to unifying the entity representations defined in different Euclidean spaces and has three major steps: (1) construct a Minimum-Spanning-Tree $k$-Nearest-Neighbor graph from a domain matrix, (2) solve for the optimal weights using Non-Negative Least Squares,  and (3) recover the missing embedding vectors in the semantic space using the power iteration method.  LSI has only two hyper-parameters, the minimal degree of the graph $\delta$ and the stopping criterion  $\eta$ in the power iteration step, and is robust to different values of the hyper-parameters. We also rigorously analyzed the deterministic convergence of LSI.  Latent Semantic Imputation can be used for solving the problem of unreliable word embedding when the size of a training corpus is limited or when the critical domain words are missing from the vocabulary.  Our experiments demonstrated how the frequency of a word negatively impacts the quality of word embedding and how LSI mitigates the impact. Specifically, the downstream language modeling with imputed embedding yields better performance than does the benchmark model pre-trained from large corpora. 

\section{Acknowledgements}
Heartfelt thanks to Yiannis Koutis of NJIT  and  Xiangmin Jiao of Stony Brook University for their helpful discussions and to the anonymous reviewers for their comments. 
\bibliographystyle{ACM-Reference-Format}
\bibliography{sample-base}

\end{document}